\documentclass{article}
\usepackage{arxiv}
\usepackage{times}
\usepackage[round]{natbib}
\renewcommand{\headeright}{Preprint}
\renewcommand{\undertitle}{Preprint}
\renewcommand{\shorttitle}{Why Adaptive Optimizers Underestimate Rare Tokens}
\usepackage{hyperref}
\usepackage{url}
\usepackage{amsmath,amssymb,amsthm}
\usepackage{graphicx}
\usepackage{booktabs}
\usepackage{xcolor}

\newtheorem{theorem}{Theorem}
\newtheorem{proposition}[theorem]{Proposition}
\theoremstyle{remark}

\newcommand{\one}{\mathbf{1}}
\newcommand{\R}{\mathbb{R}}
\newcommand{\E}{\mathbb{E}}
\newcommand{\wbar}{\bar{\mathbf{w}}}
\newcommand{\Cov}{\operatorname{Cov}}
\newcommand{\sign}{\operatorname{sign}}

\newcommand{\figorbox}[2]{\IfFileExists{#1}{\includegraphics[width=#2]{#1}}{\fbox{\parbox{#2}{\centering\vspace{2em}\textcolor{red}{Figure not found: #1}\vspace{2em}}}}}
\newcommand{\taborbox}[1]{\IfFileExists{#1}{\input{#1}}{\fbox{\textcolor{red}{Table not found: #1}}}}

\IfFileExists{results/macros.tex}{\newcommand{\ThmCheckMaxRelErr}{$2.1\times 10^{-14}$}
\newcommand{\ThmLimitMaxRelErr}{$2.5\times 10^{-3}$}
\newcommand{\ConsMaxConserving}{$1.6\times 10^{-15}$}
\newcommand{\ConsMinBreaking}{$3.6\times 10^{-1}$}
\newcommand{\STPerMaxDev}{$0.009$}
\newcommand{\STRandDevLow}{$-0.078$}
\newcommand{\STRandDevHigh}{$-5.88$}
\newcommand{\STAdamDevMax}{$0.06$}
\newcommand{\UniRMSKfive}{$-3.55$}
\newcommand{\UniAMSAbs}{$0.047$}
\newcommand{\UniCoupledAbs}{$0.046$}
\newcommand{\UniSGDAbs}{$0.039$}
\newcommand{\UniSignFracBelow}{$99\%$}
\newcommand{\UniSignFracAbove}{$0\%$}
\newcommand{\UniRMSMaxDev}{$3.86$}
\newcommand{\UniAdamMaxDev}{$3.94$}
\newcommand{\UniThKfive}{$-1.50$}
\newcommand{\LMRareSGD}{$-0.22$}
\newcommand{\LMDriftSGD}{$0.00$}
\newcommand{\LMRareAdamA}{$-2.72$}
\newcommand{\LMDriftAdamA}{$-46.83$}
\newcommand{\LMRareAdamB}{$-0.06$}
\newcommand{\LMDriftAdamB}{$-8.45$}
\newcommand{\LMRareRMS}{$-2.65$}
\newcommand{\LMDriftRMS}{$-53.42$}
\newcommand{\LMRareAMS}{$-0.14$}
\newcommand{\LMDriftAMS}{$-5.25$}
\newcommand{\LMRareCoupled}{$-0.18$}
\newcommand{\LMDriftCoupled}{$-0.00$}
\newcommand{\LMRareAdamW}{$-2.25$}
\newcommand{\LMDriftAdamW}{$-16.04$}
\newcommand{\LMThA}{$-1.60$}
\newcommand{\LMThB}{$-0.03$}
}{}
\providecommand{\ThmCheckMaxRelErr}{\PH}\providecommand{\ThmLimitMaxRelErr}{\PH}
\providecommand{\ConsMaxConserving}{\PH}\providecommand{\ConsMinBreaking}{\PH}
\providecommand{\STPerMaxDev}{\PH}\providecommand{\STRandDevLow}{\PH}
\providecommand{\STRandDevHigh}{\PH}\providecommand{\STAdamDevMax}{\PH}
\providecommand{\UniRMSMaxDev}{\PH}\providecommand{\UniAdamMaxDev}{\PH}
\providecommand{\UniRMSKfive}{\PH}\providecommand{\UniThKfive}{\PH}
\providecommand{\UniSGDAbs}{\PH}\providecommand{\UniAMSAbs}{\PH}\providecommand{\UniCoupledAbs}{\PH}
\providecommand{\UniSignFracBelow}{\PH}\providecommand{\UniSignFracAbove}{\PH}
\providecommand{\LMRareSGD}{\PH}\providecommand{\LMRareAdamA}{\PH}\providecommand{\LMRareAdamB}{\PH}
\providecommand{\LMRareRMS}{\PH}\providecommand{\LMRareAMS}{\PH}\providecommand{\LMRareCoupled}{\PH}
\providecommand{\LMRareAdamW}{\PH}\providecommand{\LMThA}{\PH}\providecommand{\LMThB}{\PH}
\providecommand{\LMDriftSGD}{\PH}\providecommand{\LMDriftAdamA}{\PH}\providecommand{\LMDriftAdamB}{\PH}
\providecommand{\LMDriftRMS}{\PH}\providecommand{\LMDriftAMS}{\PH}
\providecommand{\LMDriftCoupled}{\PH}\providecommand{\LMDriftAdamW}{\PH}

\title{Why Adaptive Optimizers Underestimate Rare Tokens:\\Biased Fixed Points in the Softmax Output Layer}
\author{Sangsidhya Kar \\
Department of Statistics\\
Presidency University, Kolkata}

\begin{document}
\maketitle

\begin{abstract}
In the softmax output layer, a rare token receives a small positive logit gradient on most steps and a much larger negative gradient on the few steps when it is the target. SGD simply adds these contributions. Coordinate-wise adaptive methods such as Adam, RMSProp, and sign descent instead divide each update by a running estimate of its magnitude, and that estimate is largest immediately after the token appears. This imbalance has two effects. At the level of the whole output layer, we characterize which optimizers preserve the mean output embedding: every method whose update is linear in past gradients does, as do Kronecker-factored and orthogonalized methods such as Shampoo and Muon. Adam, Adafactor, Lion, and sign descent do not, and for these methods we obtain an exact step-by-step expression for the change. At the level of an individual rare token, the same normalization shifts the training fixed point. In the unigram model, sign descent lowers the logit of every token that occurs in fewer than half of the minibatches at a constant expected rate. For RMSProp with periodic arrivals, we can solve the fixed point in closed form: if a token is absent for at least two consecutive minibatches, its equilibrium probability is strictly below its data frequency for every learning rate, and the ratio tends to $\kappa/(2(e^{\kappa/2}-1))$. Here $\kappa$ is the mean number of steps between occurrences divided by the second-moment time constant $1/(1-\beta_2)$. In the same model, SGD and AMSGrad retain the unbiased fixed point. We test these predictions both in a unigram model and in a small language model trained from a known generating distribution. With random arrivals, the bias is larger than the periodic formula predicts; in the language model, the optimizers with the biased fixed point also fit the generating distribution less well.
\end{abstract}

\section{Introduction}

Language models are trained predominantly with coordinate-wise adaptive optimizers such as Adam \citep{kingma2015adam}, while token frequencies are strongly heavy-tailed \citep{piantadosi2014zipf}. The softmax output layer brings these two features together. For a token $i$, the cross-entropy gradient with respect to its logit is $p_i-y_i$. When $i$ is not the target, this quantity is small and positive; when the rare event $i$ is the target occurs, it is large and negative. SGD accumulates these contributions linearly, so in expectation the token's logit stops drifting when its average predicted probability matches its frequency in the data. A coordinate-wise adaptive optimizer handles the same gradients differently: it divides each coordinate by a running root-mean-square of that coordinate's past gradients. For a rare token, that second-moment estimate is largest immediately after an observation and then decays throughout the long gap before the next one. Thus, the upward update following an observation is strongly suppressed, while the smaller downward updates later in the gap are divided by progressively smaller quantities.

There are two related effects to keep apart. At the vocabulary level, adding the same constant to every logit leaves the softmax probabilities unchanged. The corresponding output-layer gradients sum to zero across the vocabulary, and any optimizer whose update is a linear combination of past gradients preserves the mean output embedding. This is an instance of the conservation laws associated with symmetries in gradient-based learning \citep{kunin2021neural}. \citet{stollenwerk2025coupled} showed that Adam's second-moment normalization breaks this conservation law, producing the common embedding shift observed by \citet{gao2019representation,bis2021too}; that shift has subsequently been connected to output-logit divergence \citep{wortsman2024small,stollenwerk2026oec}. For an untied output layer without weight decay, however, the common shift leaves predictions unchanged.

The more important effect is local to individual rare tokens, because it changes their predicted probabilities. Coordinate-wise normalization shifts the fixed point of training, and the resulting bias remains even as the learning rate tends to zero. The main results are:
\begin{itemize}
\item \textbf{Which optimizers conserve the mean output embedding} (Section~\ref{sec:mean}). Conservation holds for SGD and momentum methods, and also for Kronecker-factored methods such as Shampoo \citep{gupta2018shampoo} and orthogonalized methods such as Muon \citep{jordan2024muon}. For any coordinate-wise method, the change of the mean at each step equals minus the learning rate times the covariance, across the vocabulary, between the momentum and the coordinate-wise scaling.
\item \textbf{Sign descent} (Section~\ref{sec:sign}). In a unigram model, sign descent lowers the logit of every token that appears in fewer than half of the minibatches at a constant expected rate, however small its probability already is.
\item \textbf{The fixed point of RMSProp} (Section~\ref{sec:rmsprop}). For a token that appears once every $N$ steps, we compute the RMSProp fixed point in closed form. For every $N\ge3$ and $\beta_2\in(0,1)$ the token's probability at the fixed point is strictly below its frequency; the ratio depends only on $N$ and $\beta_2$, and tends to $\rho(\kappa)=\kappa/(2(e^{\kappa/2}-1))$ when $N(1-\beta_2)\to\kappa$. In the same model, SGD and AMSGrad \citep{reddi2018convergence} have unbiased fixed points.
\item \textbf{Predictions and experiments} (Section~\ref{sec:exp}). The analysis predicts that, for a token of frequency $q_i\ll1/B$, the bias depends on $\beta_2$ and on the batch size $B$ only through $\kappa=(1-\beta_2)/(q_iB)$. We test this in a unigram model and in a small language model trained on data from a known generating distribution; in the unigram model it holds for $\kappa\lesssim1$ but not for larger $\kappa$, where the bias is also larger than predicted. All experiments run on a CPU with the code provided as supplementary material.
\end{itemize}
The parameter $\kappa$ is not necessarily small in common settings. With $\beta_2=0.95$, as in \citet{wortsman2024small}, and a batch size of $B=4096$, every token with frequency below $1.2\times10^{-5}$ has $\kappa>1$, while frequencies below $1.2\times10^{-6}$ give $\kappa>10$; at the latter value, $\rho(\kappa)\approx0.03$.

\section{Setting}\label{sec:setting}

The output layer maps a representation $h=h(x;\theta)\in\R^d$, produced by the rest of the network with parameters $\theta$, to logits $z=Wh+b$, where $W\in\R^{V\times d}$ and $b\in\R^V$. For a target distribution $t$—whether a one-hot label, a smoothed label, or a teacher distribution—the loss is $\ell(z,t)=\log\sum_je^{z_j}-t^\top z$, with $p=\mathrm{softmax}(z)$. Let $\one$ denote the all-ones vector in $\R^V$, let $\wbar=W^\top\one/V$ be the mean output embedding, and let $\bar b=\one^\top b/V$ be the mean bias. Then the mean logit is $\bar z(x)=\wbar^\top h(x)+\bar b$. Two facts will be used repeatedly.

\textbf{(F1)} We have $\nabla_z\ell=p-t$ and $\one^\top(p-t)=0$. It follows that, for every minibatch, the gradients $G=\frac1{|\mathcal B|}\sum_n(p_n-t_n)h_n^\top$ and $g=\frac1{|\mathcal B|}\sum_n(p_n-t_n)$ obey $\one^\top G=0$ and $\one^\top g=0$.

\textbf{(F2)} When $W$ is not tied to the input embedding, adding $\one u^\top$ to $W$ and $c\one$ to $b$ leaves the loss unchanged for every $u\in\R^d$ and $c\in\R$. The gradients are unchanged as well: $p$ is unchanged, and $\nabla_h\ell=W^\top(p-t)$ is unchanged because $\one^\top(p-t)=0$.

\textbf{Optimizers.} The coordinate-wise methods we consider update each parameter coordinate with
\begin{equation}\label{eq:adam}
m_t=\beta_1m_{t-1}+(1-\beta_1)g_t,\quad v_t=\beta_2v_{t-1}+(1-\beta_2)g_t^2,\quad \theta_{t+1}=\theta_t-\eta_t\frac{m_t/(1-\beta_1^t)}{\sqrt{v_t/(1-\beta_2^t)}+\epsilon}.
\end{equation}
Adam \citep{kingma2015adam} uses $\beta_1>0$; RMSProp \citep{tieleman2012rmsprop} is the case $\beta_1=0$; AMSGrad \citep{reddi2018convergence} replaces the second-moment estimate by its running maximum; Coupled Adam \citep{stollenwerk2025coupled} replaces it, for embedding matrices, by its average over the vocabulary index. Sign descent \citep{bernstein2018signsgd} uses $\theta_{t+1}=\theta_t-\eta\,\sign(g_t)$, and Lion \citep{chen2023lion} applies the sign to an interpolation of momentum and gradient. Adafactor \citep{shazeer2018adafactor} uses a factored second-moment estimate.

\section{The mean output embedding}\label{sec:mean}

\begin{proposition}[Conservation]\label{prop:cons}
Let $G_1,G_2,\dots$ be the gradients of $W$ from any sequence of minibatches and targets, and let the update at step $t$ be $W_{t+1}=W_t-\eta_tU_t$, with $U_t$ of one of the forms:
\textbf{(a)} $U_t=\sum_{s\le t}\alpha_{t,s}G_s$ with scalars $\alpha_{t,s}$ (SGD, heavy-ball and Nesterov momentum, clipping by the global norm);
\textbf{(b)} $U_t=A_tM_tR_t$, with $M_t$ of form (a) and $A_t^\top\one\in\mathrm{span}(\one)$; this includes Shampoo, $A_t=(\epsilon I+\sum_{s\le t}\gamma_{t,s}G_sG_s^\top)^{-1/4}$ and $R_t=(\epsilon I+\sum_{s\le t}\gamma_{t,s}G_s^\top G_s)^{-1/4}$ with $\gamma_{t,s}\ge0$;
\textbf{(c)} $U_t=M_t\,\psi_t(M_t^\top M_t)$, with $M_t$ of form (a) and $\psi_t$ any matrix function; this includes the polar factor of the momentum and its Newton--Schulz approximation used by Muon;
\textbf{(d)} $U_{t,ij}=d_{t,j}M_{t,ij}$, with $M_t$ of form (a) and a scaling that does not depend on the vocabulary index $i$ (Coupled Adam).
Then $\one^\top U_t=0$ and $\wbar_t=\wbar_0$ for all $t$. With decoupled weight decay \citep{loshchilov2019decoupled}, $W_{t+1}=(1-\eta_t\lambda)W_t-\eta_tU_t$, the mean decays geometrically: $\wbar_{t+1}=(1-\eta_t\lambda)\wbar_t$. Forms (a), (b) and (d) apply to $b$ in the same way.
\end{proposition}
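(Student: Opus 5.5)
The plan is to reduce everything to one fact, (F1): $\one^\top G_s=0$ for every $s$. From this I will show $\one^\top U_t=0$ for each of the four forms. The conclusion about the mean then follows directly:
\[
\wbar_{t+1}=\tfrac1V W_{t+1}^\top\one=\wbar_t-\tfrac{\eta_t}{V}U_t^\top\one=\wbar_t ,
\]
and induction on $t$ gives $\wbar_t=\wbar_0$. With decoupled weight decay the same computation gives $\wbar_{t+1}=(1-\eta_t\lambda)\wbar_t$.

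\textbf{Form (a).} Linearity of $\one^\top(\cdot)$ gives $\one^\top U_t=\sum_{s\le t}\alpha_{t,s}\one^\top G_s=0$. The listed methods fit this form because each of their coefficients is a scalar. Heavy-ball and Nesterov momentum are fixed linear recursions in past gradients. Global-norm clipping multiplies $G_t$ by a data-dependent scalar, and the $\alpha_{t,s}$ may depend on the data.

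\textbf{Form (b).} Write $A_t^\top\one=c_t\one$. Then
\[
\one^\top U_t=(A_t^\top\one)^\top M_tR_t=c_t\,\one^\top M_tR_t=0 .
\]
For Shampoo I need $A_t^\top\one\in\mathrm{span}(\one)$. Set $S=\epsilon I+\sum_s\gamma_{t,s}G_sG_s^\top$. It is symmetric and positive definite. It satisfies $S\one=\epsilon\one$ because $G_s^\top\one=0$, so $\one$ is an eigenvector with eigenvalue $\epsilon$. The matrix power $S^{-1/4}$ is defined spectrally, so it maps $\one$ to $\epsilon^{-1/4}\one$, and it is symmetric, so $A_t^\top\one=A_t\one$. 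The only care needed is with the spectral calculus: eigenvectors of $S$ are eigenvectors of $f(S)$. If one worries about the ordering of the $\epsilon$ term, any regularization of the form $f(\sum\gamma G G^\top)$ works the same way, because $\one$ lies in the kernel of the sum.

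\textbf{Form (c).} Here $\one^\top U_t=(\one^\top M_t)\psi_t(M_t^\top M_t)=0$ immediately, since $\psi_t$ multiplies on the right. I will check that Muon is included. The polar factor is $M(M^\top M)^{-1/2}$, using the pseudo-inverse on the range. Each Newton--Schulz iterate has the form $aX+b\,XX^\top X+c\,X(X^\top X)^2$, which is $X$ times a polynomial in $X^\top X$. By induction, each iterate equals $M_t\psi(M_t^\top M_t)$, starting from a scalar multiple of $M_t$ such as $M_t/\|M_t\|_F$.

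\textbf{Form (d).} We have $(\one^\top U_t)_j=d_{t,j}\sum_iM_{t,ij}=d_{t,j}(\one^\top M_t)_j=0$. Coupled Adam fits because it averages the second moment over $i$, so the scale $d_{t,j}$ depends only on $j$ and $t$.

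\textbf{Bias.} The argument for $b$ is the same, with $g_s$ in place of $G_s$ and $\one^\top g_s=0$. For form (b), $A_t$ acts on the left of a vector. For form (d), the scale is a single scalar.

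I expect the main obstacle to be only the Shampoo eigenvector argument, which should be straightforward.
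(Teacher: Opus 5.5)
Your proof is correct and follows the paper's argument essentially step for step. Everything reduces to (F1): linearity handles (a), the eigenvector identity $S\one=\epsilon\one$ handles Shampoo in (b), right-multiplication plus induction over the Newton--Schulz iterates handles (c), and column sums handle (d), after which the one-line computation of $\wbar_{t+1}$ covers the cases with and without weight decay. The only detail the paper adds is that Muon's transposition to the smaller side does not change the form. Your observation that each iterate is $X$ times a polynomial in $X^\top X$ already implies this.
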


The proof in Appendix~\ref{app:proofs} relies only on (F1). For part (b), $G_s^\top\one=0$ gives $L_t\one=\epsilon\one$, so $\one$ is an eigenvector of $L_t$ and of every power of $L_t$. Muon is usually used for hidden layers while the output layer is trained with AdamW \citep{jordan2024muon}; Proposition~\ref{prop:cons} shows that replacing the output-layer update by an orthogonalized one would preserve $\wbar$.

\begin{proposition}[Exact change under coordinate-wise methods]\label{prop:cov}
Let $U_{t,ij}=d_{t,ij}M_{t,ij}$ with $M_t$ of form (a) and arbitrary $d_{t,ij}$; for Adam, $M_t=m_t$ and $d_t=1/((1-\beta_1^t)(\sqrt{v_t/(1-\beta_2^t)}+\epsilon))$. Then, for every column $j$,
\[\wbar_{t+1,j}-\wbar_{t,j}=-\eta_t\,\Cov_i\big(d_{t,ij},M_{t,ij}\big),\qquad \Cov_i(a,c)=\tfrac1V\textstyle\sum_i(a_i-\bar a)(c_i-\bar c).\]
For sign-based updates $U_t=\sign(S_t)$ (sign descent, Lion), $\wbar_{t+1,j}-\wbar_{t,j}=-\frac{\eta_t}{V}(n^+_{t,j}-n^-_{t,j})$, where $n^\pm_{t,j}$ count the positive and negative entries of column $j$ of $S_t$.
\end{proposition}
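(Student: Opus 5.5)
The plan is to write the mean update column by column and then use (F1) to remove the term that would otherwise survive. Since $\wbar=W^\top\one/V$ and $W_{t+1}=W_t-\eta_tU_t$, column $j$ changes by
\[
\wbar_{t+1,j}-\wbar_{t,j}=-\frac{\eta_t}{V}\sum_iU_{t,ij}.
\]
The first step is to show that $M_t$ has zero column sums. This follows from (F1) because $M_t$ has form (a): $\one^\top M_t=\sum_{s\le t}\alpha_{t,s}\one^\top G_s=0$. Hence $\bar c:=\frac1V\sum_iM_{t,ij}=0$ for every $j$.

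The second step is the covariance identity. With $a_i=d_{t,ij}$ and $c_i=M_{t,ij}$,
\[
\frac1V\sum_ia_ic_i=\Cov_i(a,c)+\bar a\,\bar c=\Cov_i(a,c),
\]
because $\bar c=0$. This is the familiar decomposition $\frac1V\sum a_ic_i-\bar a\bar c=\frac1V\sum(a_i-\bar a)(c_i-\bar c)$, which I would verify by expanding the right-hand side. Substituting into the column-mean formula gives the claim. For Adam, it remains only to check that the update in \eqref{eq:adam} has the stated form. Take $M_t=m_t=\sum_s(1-\beta_1)\beta_1^{t-s}G_s$, which is of form (a). The remaining factor $1/((1-\beta_1^t)(\sqrt{v_t/(1-\beta_2^t)}+\epsilon))$ is entrywise and arbitrary, and it is exactly $d_t$.

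For sign-based updates I would not use the covariance identity. The column formula applies directly: $\sum_i\sign(S_{t,ij})=n^+_{t,j}-n^-_{t,j}$, since zero entries contribute nothing. This gives $-\frac{\eta_t}{V}(n^+_{t,j}-n^-_{t,j})$. Note that this part needs no structure on $S_t$ (for Lion, an interpolation of momentum and gradient), because it is just a count.

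No step is a real obstacle. The only point to be careful about is that the covariance form depends on $\one^\top M_t=0$, and that this conservation comes from $M_t$ being linear in the $G_s$ together with (F1). It is not a property of the scaling $d_t$. The same computation applies to the bias $b$ with $g_t$ in place of $G_t$.
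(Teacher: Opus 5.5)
Your proposal is correct and follows essentially the same route as the paper. Both arguments reduce the column-mean change to $-\frac{\eta_t}{V}\sum_i d_{t,ij}M_{t,ij}$ and use (F1) with form (a) to get $\sum_i M_{t,ij}=0$, so that this average product equals $\Cov_i(d_{t,ij},M_{t,ij})$; you reach this through $\frac1V\sum_i a_ic_i=\Cov_i(a,c)+\bar a\,\bar c$, while the paper subtracts $\bar d_{t,j}\sum_i M_{t,ij}=0$. Both then treat sign-based updates by directly counting the entries of $\sign(S_t)$.
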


For the bias ($h\equiv1$), a token that has not appeared recently typically has positive momentum, while its small second moment produces a large scaling factor. Their covariance is therefore positive, so the mean bias decreases. \citet{stollenwerk2025coupled} described the same mechanism in terms of the expected second moments of frequent and rare tokens. Proposition~\ref{prop:cov} gives the per-step identity for every coordinate-wise method considered here; Table~\ref{tab:cons} verifies both propositions numerically.

\textbf{Effect of the common shift.} By (F2) and induction on $t$, if the output layer is untied and has no weight decay, subtracting the vocabulary mean from each update—or applying the same $\mu$-centering directly to $W$ as in \citet{stollenwerk2026oec}—changes $W_t$ only by a term $\one c_t^\top$. In exact arithmetic, every prediction and every other parameter therefore remains unchanged. The shift can still matter through finite-precision effects \citep{liu2026grokking}, weight decay, tied embeddings \citep{press2017using}, and auxiliary objectives such as z-loss \citep{chowdhery2023palm}. It does not matter through the softmax loss itself. The rare-token effect developed next is separate: it changes the predictions, and centering does not remove it.

\section{Biased fixed points for rare tokens}\label{sec:fixed}

\subsection{A unigram model}
We isolate a single token by retaining only the output bias. The parameters are $b\in\R^V$, with prediction $p(b)=\mathrm{softmax}(b)$. At each step, a minibatch of $B$ tokens is sampled independently from $q$; if $c_t\in\mathbb N^V$ contains the token counts, then the stochastic gradient of the average cross-entropy is $g_t=p(b_t)-c_t/B$. Because $\E[c_t/B]=q$, a constant-step method that combines gradients linearly has, in expectation, the usual fixed point $p=q$, which is also the population-loss minimizer. For token $i$, let $\pi_i=1-(1-q_i)^B$ denote the probability of appearing in a minibatch. When $q_iB\ll1$, we have $\pi_i\approx q_iB$, so the mean gap between appearances is $1/\pi_i$. In the language-model setting, $b$ is the output bias and $p(b)$ is replaced by the prediction averaged over the minibatch. Stationarity with respect to $b$ then requires that the context-averaged prediction equal the empirical token distribution; this is the criterion used in Section~\ref{sec:lm}.

\subsection{Sign descent}\label{sec:sign}
\begin{theorem}[Sign descent]\label{thm:sign}
In the unigram model, let $b_{t+1}=b_t-\eta\,\sign(g_t)$ with $\sign(0)=0$, and let $\mathcal F_t$ be the history up to step $t$. For every token $i$,
$\E[b_{t+1,i}-b_{t,i}\mid\mathcal F_t]\le-\eta(1-2\pi_i)$, and almost surely $\limsup_{t\to\infty}(b_{t,i}-b_{0,i})/t\le-\eta(1-2\pi_i)$. If $q_i=0$, then $b_{t,i}=b_{0,i}-\eta t$ exactly. Moreover $\E[\bar b_{t+1}-\bar b_t\mid\mathcal F_t]\le-\eta\,(1-2\min(B,V)/V)$, which is negative when $B<V/2$.
\end{theorem}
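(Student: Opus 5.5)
The argument rests on one observation about the sign of the gradient coordinate. In the unigram model $g_{t,i}=p_i(b_t)-c_{t,i}/B$. Since the softmax probabilities satisfy $p_i(b_t)>0$ for every finite $b_t$, the only case where $\sign(g_{t,i})\neq+1$ is $c_{t,i}\ge1$, i.e.\ when token $i$ appears in the minibatch. Conditionally on $\mathcal F_t$ the counts $c_t$ are drawn fresh and independently of the past, and $\Pr(c_{t,i}\ge1)=\pi_i$. Hence
\[
\E[\sign(g_{t,i})\mid\mathcal F_t]\ \ge\ (1-\pi_i)\cdot 1+\pi_i\cdot(-1)=1-2\pi_i ,
\]
since the sign is always at least $-1$. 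Multiplying by $-\eta$ gives the per-step conditional drift bound. If $q_i=0$, then $c_{t,i}=0$ surely, so $g_{t,i}=p_i>0$ and every step subtracts exactly $\eta$; this gives $b_{t,i}=b_{0,i}-\eta t$.

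For the almost-sure statement I would write
\[
b_{t,i}-b_{0,i}=\sum_{s<t}\E[\Delta_s\mid\mathcal F_s]+\sum_{s<t}\big(\Delta_s-\E[\Delta_s\mid\mathcal F_s]\big),
\qquad \Delta_s=b_{s+1,i}-b_{s,i}.
\]
The first sum is at most $-\eta(1-2\pi_i)t$ by the drift bound. The second sum is a martingale whose increments are bounded by $2\eta$. By the strong law for martingales with bounded increments (or Azuma--Hoeffding combined with Borel--Cantelli), the second sum divided by $t$ tends to $0$ almost surely, and the $\limsup$ bound follows. I expect this step to be the main technical point, but it is routine once the increments are known to be bounded.

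For the mean bias I would sum over coordinates:
\[
\bar b_{t+1}-\bar b_t=-\tfrac{\eta}{V}\sum_i\sign(g_{t,i}).
\]
The coordinates with $c_{t,i}\ge1$ number at most $K_t:=\#\{i:c_{t,i}\ge1\}\le\min(B,V)$, and each contributes a sign of at least $-1$. Every other coordinate contributes exactly $+1$. Therefore
\[
\sum_i\sign(g_{t,i})\ge (V-K_t)-K_t\ge V-2\min(B,V)
\]
surely, not merely in expectation. Dividing by $V$ and taking the conditional expectation gives $\E[\bar b_{t+1}-\bar b_t\mid\mathcal F_t]\le-\eta\,(1-2\min(B,V)/V)$, and this is negative exactly when $B<V/2$. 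This last step is a worst-case count and needs no probabilistic input beyond the per-step identity.
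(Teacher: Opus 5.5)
Your proposal is correct and follows the paper's proof essentially step for step: the same observation that $c_{t,i}=0$ forces a step of exactly $-\eta$, the same conditional drift bound, and the same bounded-increment martingale argument (Azuma--Hoeffding with Borel--Cantelli) for the almost-sure $\limsup$. The only difference is in the mean-bias part. You bound $\sum_i\sign(g_{t,i})\ge V-2K_t$ pathwise, whereas the paper averages the per-token expectation bounds and uses $\sum_i\pi_i=\E[K_t]\le\min(B,V)$. Your version gives a slightly stronger sure bound, but it rests on the same counting idea.
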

The key observation is: If $c_{t,i}=0$, then $g_{t,i}=p_i>0$ and the update is $-\eta$; otherwise the update is at most $\eta$ in the opposite direction (Appendix~\ref{app:proofs}). The theorem concerns logits rather than probabilities, and part of the decrease is a common shift (Section~\ref{sec:mean}). The important consequence is that the downward force on a token appearing in fewer than half of the minibatches does not weaken as its probability becomes small, whereas under SGD that force is $\eta p_i$. For a token that never occurs, including an unused vocabulary entry \citep{land2024fishing}, the decrease is exactly linear.

\subsection{RMSProp: the fixed point for periodic arrivals}\label{sec:rmsprop}
\textbf{Setting.} Fix a token $i$ with $q_iB<1$. Assume (A1) that it appears exactly once every $N\ge2$ steps, namely at $t\equiv0\pmod N$, so $q_i=1/(NB)$; and (A2) that over one period $\log\sum_je^{b_j}$ is constant while $p_i$ changes negligibly, corresponding to small $\eta$. The gradient of $b_i$ is then $g_t=p-a$ at an arrival and $g_t=p$ at every other step, where $a=1/B$ and $p=p_i$. Over one period, the change in $b_i$ is therefore the change in $\log p_i$. Set $x:=a/p$; the unbiased point $p=q_i$ corresponds to $x=N$. With $\epsilon=0$, RMSProp follows $b_{t+1,i}=b_{t,i}-\eta g_t/\sqrt{v_t}$ and $v_t=\beta_2v_{t-1}+(1-\beta_2)g_t^2$.

\begin{theorem}[Fixed point of RMSProp]\label{thm:rms}
Under (A1)--(A2), for all $\beta_2\in(0,1)$ and $N\ge2$:
\begin{enumerate}
\item[(i)] $v_t$ converges to a unique $N$-periodic sequence, along which the change of $\log p_i$ over one period is
\[\Delta(x)=-\eta\Big(\sum_{n=1}^{N-1}\bar v_n^{-1/2}-(x-1)\,\bar v_0^{-1/2}\Big),\quad \bar v_0=\frac{(1-\beta_2)(x-1)^2+\beta_2-\beta_2^N}{1-\beta_2^N},\quad \bar v_n=\beta_2^n\bar v_0+1-\beta_2^n.\]
It depends on $p$ only through $x$, and not on $a$.
\item[(ii)] If $N\ge3$, then $\Delta(N)<0$: at the unbiased value $p=q_i$, the logit decreases.
\item[(iii)] $\Delta$ has a unique zero $x^\star\in(N,\infty)$, which is stable. The fixed point $p^\star=a/x^\star$ satisfies $p^\star<q_i$, and $p^\star/q_i=N/x^\star$ depends only on $N$ and $\beta_2$; in particular it does not depend on $\eta$.
\item[(iv)] If $N\to\infty$ with $\beta_2=1-\kappa/N$ for a fixed $\kappa>0$, then $p^\star/q_i\to\rho(\kappa):=\dfrac{\kappa}{2(e^{\kappa/2}-1)}$.
\end{enumerate}
\end{theorem}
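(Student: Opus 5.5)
The approach is to remove the scale $p$ from the dynamics. Let $w_t:=v_t/p^2$. The normalized gradient is $g_t/p=-(x-1)$ at an arrival and $g_t/p=1$ otherwise, and the step $g_t/\sqrt{v_t}=(g_t/p)/\sqrt{w_t}$ is scale-invariant. Under (A2), $p$ is frozen over a period, so $w$ obeys the affine recursion $w_t=\beta_2w_{t-1}+(1-\beta_2)(g_t/p)^2$. Its inputs are $(x-1)^2$ at $t\equiv0$ and $1$ otherwise, and they depend only on $x$. This already gives the claims in (i) that $\Delta$ depends on $p$ only through $x$ and not on $a$.

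For (i), compose the recursion over one period. This is an affine map of $w_0$ with slope $\beta_2^N<1$, so it is a contraction, and $w$ converges geometrically to a unique $N$-periodic orbit. Between arrivals the input is constantly $1$, which gives $\bar w_n=\beta_2^n\bar w_0+1-\beta_2^n$ for $1\le n\le N-1$. Closing the loop, $\bar w_0=\beta_2\bar w_{N-1}+(1-\beta_2)(x-1)^2$, and solving yields the stated $\bar v_0$. Summing $-\eta\,(g_n/p)/\sqrt{\bar w_n}$ over the period gives $\Delta(x)$; the arrival step contributes $+\eta(x-1)\bar v_0^{-1/2}$.

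For (ii), note that $\bar v_n<\bar v_0$ for $n\ge1$ exactly when $\bar v_0>1$. Clearing the denominator $1-\beta_2^N$, the condition $\bar v_0>1$ is equivalent to $(x-1)^2>1$, i.e.\ $x>2$. At $x=N\ge3$ each of the $N-1$ terms $\bar v_n^{-1/2}$ therefore strictly exceeds $\bar v_0^{-1/2}$. Hence $\sum_{n=1}^{N-1}\bar v_n^{-1/2}>(N-1)\bar v_0^{-1/2}$, and so $\Delta(N)<0$.

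For (iii), write $\Delta=-\eta F$ with $F(x)=\sum_{n\ge1}\bar v_n^{-1/2}-(x-1)\bar v_0^{-1/2}$.
\begin{itemize}
\item On $x\le1$ the arrival step pushes downward, so $\Delta<0$ and there is no zero.
\item On $x>1$, $\bar v_0$ is increasing in $x$, and each $\bar v_n$ is increasing in $\bar v_0$, so the sum is strictly decreasing.
\item The map $y\mapsto y/\sqrt{cy^2+d}$ with $c,d>0$ is strictly increasing. Applied with $y=x-1$, this shows $(x-1)\bar v_0^{-1/2}$ is increasing, so $F$ is strictly decreasing on $x>1$.
\item By (ii), $F(N)>0$. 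As $x\to\infty$ the sum tends to $0$ while $(x-1)\bar v_0^{-1/2}\to\sqrt{(1-\beta_2^N)/(1-\beta_2)}>0$, so $F$ eventually becomes negative.
\end{itemize}
Hence $F$ has a unique zero $x^\star\in(N,\infty)$. For stability, if $p>p^\star$ then $x<x^\star$, so $F>0$, $\Delta<0$, and $\log p$ decreases; the case $p<p^\star$ is symmetric. Finally $p^\star/q_i=N/x^\star$, where $x^\star$ is defined purely in terms of $N$ and $\beta_2$; $\eta$ enters $\Delta$ only as a factor, so the ratio does not depend on $\eta$.

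Step (iv) is the main obstacle, because it requires a uniform limit. Set $\beta_2=1-\kappa/N$ and $x=Ns$ with $s$ in a compact subset of $(0,\infty)$. Then $\bar v_0=\frac{\kappa N s^2}{1-e^{-\kappa}}\,(1+O(1/N))$, which grows like $N$. Consequently $\bar v_{uN}=e^{-\kappa u}\bar v_0\,(1+O(1/N))$ uniformly in $u\in[0,1]$, because the additive term $1-\beta_2^n$ is $O(1)$. A Riemann sum then gives
\[\frac{\sqrt{\bar v_0}}{N}\,F(Ns)\;\to\;\int_0^1e^{\kappa u/2}\,du-s=\frac{2}{\kappa}\big(e^{\kappa/2}-1\big)-s,\]
uniformly on compacts. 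To conclude that $x^\star/N$ converges to the zero $s^\star=\frac2\kappa(e^{\kappa/2}-1)$, I would first trap $x^\star/N$ in a compact interval. The plan is to use monotonicity of $F$ and evaluate the sign of the scaled $F$ at two fixed values $s_-<s^\star<s_+$ for large $N$. Monotonicity plus uniform convergence then forces $x^\star/N\to s^\star$, giving $p^\star/q_i=N/x^\star\to1/s^\star=\rho(\kappa)$.
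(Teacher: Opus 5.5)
Your proposal is correct and follows the paper's proof essentially step for step: the $p^2$-normalization and the $\beta_2^N$-contraction of the period map in (i), the comparison $\bar v_n<\bar v_0\iff\bar v_0>1$ in (ii), strict monotonicity plus the limit $\sqrt{(1-\beta_2^N)/(1-\beta_2)}$ in (iii), and the same $\sqrt{\bar v_0}/N$-scaled Riemann-sum limit in (iv). In (iv), your sign-trapping at $s_\pm$ through monotonicity of the unscaled $F$ is slightly more careful than the paper's claim that the scaled function is itself increasing, because multiplying by the growing factor $\sqrt{\bar v_0}$ preserves the sign of $F$ but not its monotonicity, and the sign is all that is needed. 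The one point to add is that ``by (ii), $F(N)>0$'' needs $N\ge3$. For $N=2$ one gets $\bar v_0=\bar v_1=1$ at $x=2$, so $x^\star=N$ exactly and the strict conclusions of (iii) fail; the paper records this exception in its proof.
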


The function $\rho$ decreases from $1$ to $0$, with $\rho(\kappa)=1-\kappa/4+O(\kappa^2)$ and $\log\rho(\kappa)=\log(\kappa/2)-\kappa/2+o(1)$. For comparison, $\rho(0.1)=0.975$, $\rho(1)=0.771$, $\rho(2)=0.582$, $\rho(5)=0.224$, and $\rho(10)=0.034$. The parameter $\kappa=N(1-\beta_2)$ compares the mean number of steps between occurrences with the second-moment time constant $1/(1-\beta_2)$; equivalently, $\kappa=(1-\beta_2)/(q_iB)$. Three implications are particularly useful. \emph{The learning rate does not change the fixed point:} upward and downward steps both scale with $\eta$, so reducing it only slows the approach to $p^\star$. \emph{For a fixed token, only $(1-\beta_2)/B$ matters:} halving $B$ has the same effect as doubling $1-\beta_2$. \emph{The bias becomes large once the token is observed less often than once per time constant,} i.e. when $\kappa>1$. Appendix~\ref{app:proofs} derives the periodic orbit of $v$, uses that $\bar v_n$ is a convex combination of $\bar v_0$ and $1$, and shows that $\Delta$ is increasing in $x$. An independent numerical computation of the periodic orbit agrees with the closed form to relative error \ThmCheckMaxRelErr, and at $N=4000$ the finite-$N$ ratio differs from $\rho(\kappa)$ by relative error \ThmLimitMaxRelErr.

\begin{proposition}[Unbiased and unstable cases]\label{prop:others}
Under (A1)--(A2):
\textbf{(a)} SGD with a constant step changes $\log p_i$ by $-\eta p(N-x)$ over one period; its fixed point is $p=q_i$, and it is stable.
\textbf{(b)} For AMSGrad with $\beta_1=0$, the running maximum $\hat v_t$ converges to some $\hat v_\infty>0$, and the change over one period converges to $-\eta\hat v_\infty^{-1/2}p(N-x)$; the fixed point is again $p=q_i$.
\textbf{(c)} Sign descent changes the logit by $-\eta(N-2)$ over one period for every $p$; for $N\ge3$ there is no fixed point.
\end{proposition}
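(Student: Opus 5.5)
The plan is to work directly under (A1)--(A2) in each of the three cases. Over one period we have $N-1$ steps with gradient $g=p$ and one arrival step with $g=p-a=p(1-x)$. By (A2), $p$ stays constant over the period, and the change in $\log p_i$ equals the change in $b_i$. So in every case it suffices to add up the $N$ updates of $b_i$ and express the sum in terms of $x=a/p$.

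\textbf{(a) SGD.} The $N$ gradients sum to $Np-a=p(N-x)$, so the period change is $-\eta p(N-x)$. This quantity is negative for $x<N$ and positive for $x>N$. Since $x$ decreases as $p$ increases, the logit moves down when $p>q_i$ and up when $p<q_i$. Hence the unique zero $x=N$, i.e.\ $p=q_i$, is stable.

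\textbf{(b) AMSGrad with $\beta_1=0$.} I will first argue that $v_t$ is bounded above, since $g_t^2\le\max(p^2,(a-p)^2)$ and these are bounded for $p$ in a compact range. The running maximum $\hat v_t$ is nondecreasing and bounded, so it converges to some $\hat v_\infty$. We have $\hat v_\infty>0$ as soon as any nonzero gradient has occurred, and $p>0$ guarantees this. Then $\hat v_t^{-1/2}\to\hat v_\infty^{-1/2}$, and summing over a period gives $-\eta\sum_t\hat v_t^{-1/2}g_t\to-\eta\hat v_\infty^{-1/2}p(N-x)$. The sign analysis is identical to (a), so the fixed point is $p=q_i$.

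\textbf{(c) Sign descent.} Since $p>0$, each of the $N-1$ non-arrival steps contributes $-\eta$. At the arrival step the contribution is $-\eta\,\sign(p-a)$, which is at most $+\eta$. Reading the statement as the regime $p<a$ (which includes every $p$ near $q_i=a/N$), the arrival contributes $+\eta$ and the total is $-\eta(N-2)$. This matches Theorem~\ref{thm:sign}, where $\pi_i=1/N$ and the drift is $-\eta(1-2/N)$ per step. If instead $p\ge a$, the total is even more negative. Either way, for $N\ge3$ the period change is strictly negative for every $p$, so there is no fixed point.

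The only step that needs any care is (b). One has to check that $\hat v_\infty$ is finite and positive, and that the error $\sum_t(\hat v_t^{-1/2}-\hat v_\infty^{-1/2})g_t$ over a period vanishes. The latter follows from monotone convergence together with bounded gradients. The remaining parts are direct summations.
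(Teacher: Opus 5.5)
Your proposal is correct and follows the paper's proof essentially step for step. It uses the same period sum $Np-a=p(N-x)$ for (a), the same monotone-and-bounded argument for the running maximum in (b), and the same sign count in (c). Your additions are accurate refinements rather than a different route: the explicit stability check in (a), and the remark that the formula $-\eta(N-2)$ in (c) needs $p<a$. The paper simply assumes $p<a$; when $p\ge a$ the period change is only more negative.
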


For Coupled Adam, token $i$ contributes only a fraction $1/V$ of the vocabulary-wide second-moment average. When that average is constant over a period, the method reduces to the behavior in (a). Theorem~\ref{thm:rms} does not address $\beta_1>0$, random arrivals, or the rows of $W$, where the gradients also depend on the contexts in which the token occurs. Section~\ref{sec:exp} tests all three effects.

\section{Experiments}\label{sec:exp}
Every number and figure in this section comes from \texttt{run\_experiments.py}, provided as supplementary material; all experiments run in under two hours on a CPU. Appendix~\ref{app:details} gives the complete configuration.

\subsection{Conservation across optimizers}
We train a softmax regression with $V=512$ classes, 128 of which never occur, on features with a nonzero mean ($d=32$, $B=64$, 300 steps, float64). Table~\ref{tab:cons} records the changes in $\wbar$ and $\bar b$. Every optimizer covered by Proposition~\ref{prop:cons} preserves both quantities up to rounding error, with largest change \ConsMaxConserving; each of the remaining optimizers changes them, with smallest change \ConsMinBreaking.

\begin{table}[t]
\centering\small
\caption{Change of the mean output embedding and mean bias after training a softmax regression in float64. ``Conserved'' is the prediction of Proposition~\ref{prop:cons}; Muon is applied to $W$, with heavy-ball momentum on $b$.}\label{tab:cons}
\begin{tabular}{llccc}
\toprule
Optimizer & Update & Conserved (Prop.~1) & $\|\Delta\bar{\mathbf w}\|_2$ & $|\Delta\bar b|$\\
\midrule
SGD & linear & yes & $5.4\times 10^{-17}$ & $3.5\times 10^{-18}$\\
Heavy ball & linear & yes & $5.7\times 10^{-17}$ & $3.5\times 10^{-18}$\\
Nesterov & linear & yes & $9.0\times 10^{-17}$ & $3.5\times 10^{-18}$\\
Shampoo & Kronecker & yes & $1.8\times 10^{-16}$ & $1.6\times 10^{-15}$\\
Muon (heavy ball on $b$) & orthogonalized & yes & $7.5\times 10^{-17}$ & $6.9\times 10^{-18}$\\
Coupled Adam & shared scaling & yes & $5.0\times 10^{-17}$ & $0$\\
Adam & coordinate-wise & no & $6.6\times 10^{-1}$ & $1.4\times 10^{-1}$\\
RMSProp & coordinate-wise & no & $6.4\times 10^{-1}$ & $1.4\times 10^{-1}$\\
AMSGrad & coordinate-wise & no & $5.1\times 10^{-1}$ & $1.1\times 10^{-1}$\\
Adafactor & factored & no & $5.3\times 10^{-1}$ & $1.4\times 10^{-1}$\\
Lion & sign & no & $3.6\times 10^{-1}$ & $7.1\times 10^{-2}$\\
Sign descent & sign & no & $1.4\times 10^{0}$ & $2.6\times 10^{-1}$\\
\bottomrule
\end{tabular}

\end{table}

\subsection{A single token and the unigram model}
\textbf{Single token.} We iterate the recursion from Section~\ref{sec:rmsprop} with $p=e^{b_i}$, a fixed log-partition, and $a=10^{-3}$, using 30 values of $N$ between 3 and 1000, $\beta_2=0.99$, and $\eta=3\times10^{-3}$. We initialize at the unbiased value and average $\log p$ over the last third of $4\times10^5$ steps. Unlike (A2), $p$ is allowed to move within each period. For periodic arrivals, the measured fixed point agrees with Theorem~\ref{thm:rms}(iii), with largest deviation in $\log p$ equal to \STPerMaxDev (Figure~\ref{fig:unigram}a). Adam with $\beta_1=0.9$ differs by at most \STAdamDevMax, so momentum changes the fixed point little. Random arrivals behave differently: the token appears independently at each step with probability $1/N$; for $\kappa<1$ the results remain close to the periodic prediction (mean deviation \STRandDevLow), whereas for $\kappa\ge2$ the logit averaged over the last third of $4\times10^5$ steps lies well below it (mean deviation \STRandDevHigh). The periodic formula therefore understates the bias when $\kappa$ is large: the downward push over a gap grows faster than linearly in the gap length, and geometric gaps sometimes extend far beyond their mean.

\textbf{Unigram model.} We use $V=5000$ tokens with Zipf frequencies $q_i\propto i^{-1.2}$, initialize at the population optimum $b=\log q$, and run $3\times10^5$ steps. Adaptive methods use learning rate $4\times10^{-3}$ and $\epsilon=10^{-12}$. The three pairs $(\beta_2,B)\in\{(0.95,500),(0.99,100),(0.999,100)\}$ give $\kappa_i=(1-\beta_2)/\pi_i$ ranging from $10^{-2}$ to about 13. Figure~\ref{fig:unigram}b plots the average $\log(p_i/q_i)$ over the second half of training against $\kappa_i$. Arrivals are random, and the pattern follows the single-token experiment. Under both RMSProp and Adam, tokens with $\kappa>1$ receive less probability than their frequency, with the bias growing as $\kappa$ increases. For $0.2\lesssim\kappa\lesssim1$ it is slightly positive (below $0.2$ it is smaller), while for large $\kappa$ it exceeds $\log\rho(\kappa)$. Over bins with $0.1\le\kappa\le10$, the largest deviation from $\log\rho(\kappa)$ is \UniRMSMaxDev{} for RMSProp and \UniAdamMaxDev{} for Adam; near $\kappa=5$, RMSProp with $\beta_2=0.99$ gives \UniRMSKfive{} against $\log\rho(5)=\text{\UniThKfive}$. Theorem~\ref{thm:rms} therefore predicts the sign of the bias for $\kappa>1$ and its growth with $\kappa$, but the closed form is quantitative only for periodic arrivals or small $\kappa$. For $\kappa\lesssim1$, the three $(\beta_2,B)$ pairs lie on one curve; for larger $\kappa$ they separate. At the same $\kappa$, $(0.99,100)$ is more biased than $(0.95,500)$ (about $-6.4$ versus $-4.7$ near $\kappa=7.5$). With random arrivals, $\kappa$ therefore no longer determines the bias by itself once it exceeds 1. Figure~\ref{fig:unigram}c gives the controls: the mean of $|\log(p_i/q_i)|$ over tokens with $\kappa_i\ge1$ is \UniSGDAbs{} for SGD, \UniAMSAbs{} for AMSGrad, and \UniCoupledAbs{} for Coupled Adam. Under sign descent, \UniSignFracBelow{} of tokens with $\pi_i<1/2$ end with $\log(p_i/q_i)<-5$, compared with \UniSignFracAbove{} of tokens with $\pi_i>1/2$, consistent with Theorem~\ref{thm:sign}.

\begin{figure}[t]
\centering
\figorbox{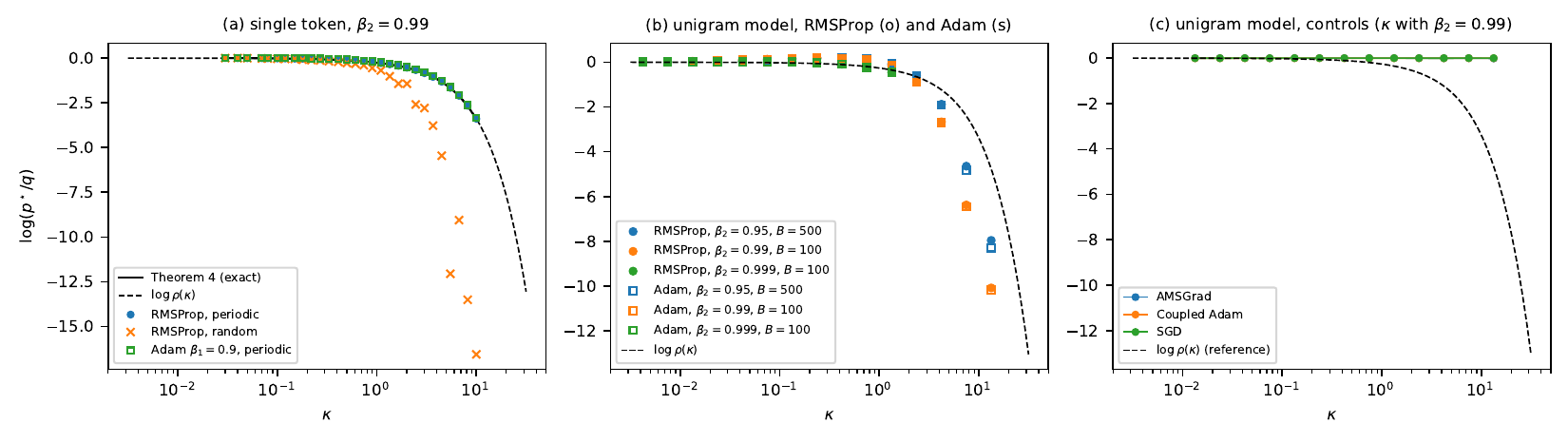}{\linewidth}
\caption{Fixed points for rare tokens. (a) Single token, $\beta_2=0.99$: measured $\log(p^\star/q)$ for RMSProp with periodic and random arrivals and for Adam ($\beta_1=0.9$) with periodic arrivals; solid: exact fixed point of Theorem~\ref{thm:rms}(iii); dashed: $\log\rho(\kappa)$. (b) Unigram model, $V=5000$: time average of $\log(p_i/q_i)$, binned by $\kappa_i=(1-\beta_2)/\pi_i$, for three pairs $(\beta_2,B)$; circles: RMSProp, squares: Adam. (c) The same for SGD, AMSGrad and Coupled Adam ($\kappa_i$ computed with $\beta_2=0.99$).}\label{fig:unigram}
\end{figure}

\subsection{A small language model}\label{sec:lm}
\textbf{Setup.} The data come from a first-order Markov chain on 2048 tokens, with $P(y\mid x)\propto q_y\exp(3\,u_x^\top u_y)$, Zipf weights $q_y\propto y^{-1.1}$, and random unit vectors $u_x\in\R^{16}$. The vocabulary contains $V=4096$ entries, half of which never occur. We draw $5.2\times10^5$ training pairs $(x,y)$. The model uses an embedding of width 64, a residual MLP block with LayerNorm and hidden width 256, a final LayerNorm, and an untied output layer with bias. Training runs for $2\times10^4$ steps with $B=256$, a 200-step warmup, and then a constant learning rate. Since the generating distribution is known, we can measure the KL divergence from that distribution to the model, averaged over contexts. The output bias also gives a stationarity condition: every stationary point satisfies $\bar p_y=f_y$, where $\bar p$ is the prediction averaged over training contexts and $f$ is the empirical token distribution. Thus $r_y=\log(\bar p_y/f_y)$ records either incomplete optimization or a bias introduced by the optimizer. We plot it against $e_y=f_yB$, the expected number of occurrences of $y$ in a minibatch.

\textbf{Results.} With $\beta_2=0.95$, Adam, RMSProp and AdamW give rare tokens ($e_y<0.05$) less probability than their frequency (mean $r_y$: \LMRareAdamA, \LMRareRMS{} and \LMRareAdamW). Raising $\beta_2$ to $0.999$ reduces the bias to \LMRareAdamB. AMSGrad and Coupled Adam, which the theory predicts to be unbiased, give \LMRareAMS{} and \LMRareCoupled. SGD gives \LMRareSGD; since SGD has no bias at its fixed point, this value measures incomplete optimization and sets the resolution of the measurement. For $\beta_2=0.95$ the bias is larger than the unigram prediction for the same tokens (\LMThA), as in the unigram model with random arrivals; for $\beta_2=0.999$ both the prediction (\LMThB) and the measurement are below this resolution. The bias comes with a worse fit to the generating distribution. KL and test cross-entropy (Table~\ref{tab:lm}) follow nearly the same ordering: SGD has the lowest KL, AMSGrad and Coupled Adam come next, and the three biased methods with $\beta_2=0.95$ have the highest. We do not separate how much of this difference comes from rare tokens. Figure~\ref{fig:lm}b shows the mean logit. It stays at its initial value under SGD (\LMDriftSGD) and Coupled Adam (\LMDriftCoupled), as Proposition~\ref{prop:cons} requires, and moves under Adam (\LMDriftAdamA{} with $\beta_2=0.95$, \LMDriftAdamB{} with $0.999$), RMSProp (\LMDriftRMS) and AMSGrad (\LMDriftAMS). Weight decay with $\lambda=0.1$ reduced the change to \LMDriftAdamW{} but did not stop it.

\begin{figure}[t]
\centering
\figorbox{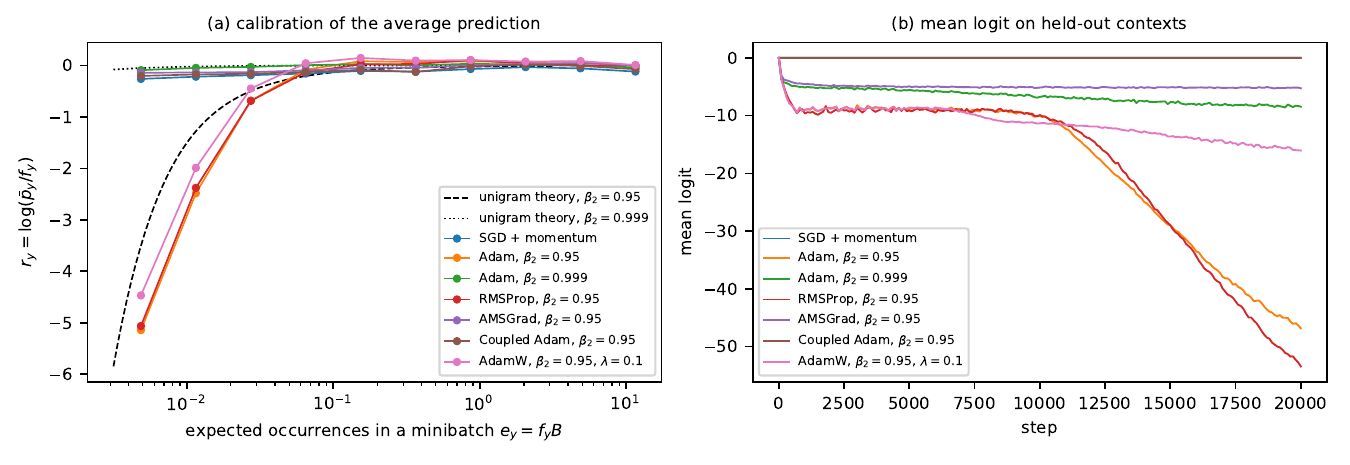}{0.85\linewidth}
\caption{Small language model. (a) $r_y=\log(\bar p_y/f_y)$, binned by the expected number of occurrences in a minibatch; dashed and dotted: $\log\rho(\kappa)$ from the unigram theory for $\beta_2=0.95$ and $0.999$. (b) Mean logit over the vocabulary on held-out contexts.}\label{fig:lm}
\end{figure}

\begin{table}[t]
\centering\small
\caption{Small language model after training. KL: divergence from the generating distribution to the model, averaged over contexts. $\bar r$: mean of $r_y$ over tokens in the stated range of $e_y$. Mean logit: over the vocabulary and 2048 held-out contexts. Unused $\log p$: average log-probability of the entries that never occur.}\label{tab:lm}
\begin{tabular}{lrrrrrr}
\toprule
Optimizer & Test CE & KL & $\bar r$, $e_y{<}0.05$ & $\bar r$, $e_y{\ge}1$ & Mean logit & Unused $\log p$\\
\midrule
SGD + momentum & $3.467$ & $0.0940$ & $-0.22$ & $-0.06$ & $0.00$ & $-13.5$\\
Adam, $\beta_2{=}0.95$ & $3.603$ & $0.2278$ & $-2.72$ & $0.04$ & $-46.83$ & $-27.4$\\
Adam, $\beta_2{=}0.999$ & $3.537$ & $0.1586$ & $-0.06$ & $-0.01$ & $-8.45$ & $-21.7$\\
RMSProp, $\beta_2{=}0.95$ & $3.594$ & $0.2190$ & $-2.65$ & $0.03$ & $-53.42$ & $-26.6$\\
AMSGrad, $\beta_2{=}0.95$ & $3.500$ & $0.1241$ & $-0.14$ & $0.01$ & $-5.25$ & $-16.4$\\
Coupled Adam, $\beta_2{=}0.95$ & $3.499$ & $0.1238$ & $-0.18$ & $-0.02$ & $-0.00$ & $-13.2$\\
AdamW, $\beta_2{=}0.95$, $\lambda{=}0.1$ & $3.558$ & $0.1828$ & $-2.25$ & $0.06$ & $-16.04$ & $-20.5$\\
\bottomrule
\end{tabular}

\end{table}

\section{Related work}
\textbf{Symmetries and conservation laws.} \citet{kunin2021neural} identified quantities preserved by gradient flow under translation, scale, and rescale symmetries, including the translation symmetry of the softmax, and analyzed how weight decay, momentum, stochasticity, and finite step sizes can break them; see also \citet{tanaka2021noether,zhao2023symmetries}. Proposition~\ref{prop:cons} places Kronecker-factored and orthogonalized methods \citep{gupta2018shampoo,jordan2024muon,bernstein2024old} in the conserving class, while Proposition~\ref{prop:cov} gives the exact per-step change for coordinate-wise methods that violate the conservation law.

\textbf{Common shift of output embeddings and output logit divergence.} \citet{gao2019representation} observed degeneration of embeddings toward a narrow cone, while \citet{bis2021too} reported a common shift. \citet{stollenwerk2025coupled} attributed that shift to Adam's second-moment estimate and proposed Coupled Adam. Later, \citet{stollenwerk2026oec} related the mean output embedding to output-logit divergence \citep{chowdhery2023palm,wortsman2024small} and proposed output embedding centering. \citet{liu2026grokking} showed that low-precision rounding can break the zero-sum gradient property in finite-precision training. All of this work concerns the common shift. The present analysis instead focuses on the fixed point of each token, which centering leaves unchanged.

\textbf{Adaptive methods and rare classes.} \citet{kunstner2024heavy} found that, under heavy-tailed class imbalance, gradient descent makes slower progress on rare classes whereas sign descent does not; \citet{balles2018dissecting,kunstner2023noise} discuss the connection between Adam and sign descent. The result here is complementary: the faster progress of adaptive normalization can come with a shifted fixed point. \citet{land2024fishing} use output embeddings to detect under-trained tokens.

\textbf{Non-convergence of Adam.} \citet{reddi2018convergence} constructed a one-dimensional example with a large gradient every few steps and smaller gradients of the opposite sign between them, where Adam converges to the wrong point, and introduced AMSGrad as a remedy. The gradient pattern of a rare token's logit has the same form, making rare tokens an instance of that construction; Theorem~\ref{thm:rms} gives the corresponding fixed point in closed form. \citet{zhang2022adam} established convergence when $\beta_2$ is sufficiently close to $1$, while \citet{kim2026second} showed that second-moment memory slows progress in the presence of rare large gradients. Adagrad \citep{duchi2011adaptive}, whose second moment never decays, appears here as the limit $\kappa\to0$.

\textbf{Long-tailed classification and calibration.} Logit adjustment \citep{menon2021longtail} explicitly corrects logits for rare classes, while \citet{guo2017calibration} study calibration in modern neural networks. The present analysis identifies an additional optimizer-dependent mechanism that can shift the probabilities assigned to rare classes.

\section{Discussion and limitations}
The theory relies on several strong simplifications: a single bias coordinate, periodic arrivals, $\beta_1=0$, $\epsilon=0$, and assumption (A2). The experiments relax the first three assumptions, but only on small synthetic problems. In the small language model, the optimizers with biased fixed points also produce a higher KL divergence from the generating distribution. Whether the same effect is measurable in large language models, and how much it influences downstream behavior such as the generation of rare words, remains open. During pretraining with millions of tokens per batch, $\kappa>1$ occurs only for extremely rare tokens; in fine-tuning and other small-batch regimes with $\beta_2=0.95$, a substantial fraction of a vocabulary can satisfy $\kappa>1$. For the output layer, the analysis suggests several ways to reduce the bias: increasing $\beta_2$, increasing the batch size, using a running maximum of the second moment as in AMSGrad, sharing the second moment across the vocabulary as in Coupled Adam, or using SGD for the output bias. A positive $\epsilon$ also suppresses the effect once $\sqrt v$ falls below $\epsilon$. Finally, with tied embeddings \citep{press2017using}, the common shift from Section~\ref{sec:mean} now affects the loss because it also shifts the input embeddings.

\bibliography{refs}

@inproceedings{kingma2015adam, title={Adam: A Method for Stochastic Optimization}, author={Kingma, Diederik P. and Ba, Jimmy}, booktitle={International Conference on Learning Representations}, year={2015}}

@inproceedings{reddi2018convergence, title={On the Convergence of {Adam} and Beyond}, author={Reddi, Sashank J. and Kale, Satyen and Kumar, Sanjiv}, booktitle={International Conference on Learning Representations}, year={2018}}

@inproceedings{loshchilov2019decoupled, title={Decoupled Weight Decay Regularization}, author={Loshchilov, Ilya and Hutter, Frank}, booktitle={International Conference on Learning Representations}, year={2019}}

@inproceedings{kunin2021neural, title={Neural Mechanics: Symmetry and Broken Conservation Laws in Deep Learning Dynamics}, author={Kunin, Daniel and Sagastuy-Brena, Javier and Ganguli, Surya and Yamins, Daniel L. K. and Tanaka, Hidenori}, booktitle={International Conference on Learning Representations}, year={2021}}

@inproceedings{tanaka2021noether, title={Noether's Learning Dynamics: Role of Symmetry Breaking in Neural Networks}, author={Tanaka, Hidenori and Kunin, Daniel}, booktitle={Advances in Neural Information Processing Systems}, year={2021}}

@inproceedings{zhao2023symmetries, title={Symmetries, Flat Minima, and the Conserved Quantities of Gradient Flow}, author={Zhao, Bo and Ganev, Iordan and Walters, Robin and Yu, Rose and Dehmamy, Nima}, booktitle={International Conference on Learning Representations}, year={2023}}

@inproceedings{stollenwerk2025coupled, title={Better Embeddings with Coupled {Adam}}, author={Stollenwerk, Felix and Stollenwerk, Tobias}, booktitle={Proceedings of the 63rd Annual Meeting of the Association for Computational Linguistics (Volume 1: Long Papers)}, pages={27219--27236}, year={2025}}

@article{stollenwerk2026oec, title={Output Embedding Centering for Stable {LLM} Pretraining}, author={Stollenwerk, Felix and Lokrantz, Anna and Hertzberg, Niclas}, journal={arXiv preprint arXiv:2601.02031}, year={2026}}

@article{liu2026grokking, title={Grokking or Glitching? How Low-Precision Drives Slingshot Loss Spikes}, author={Liu, Hanqing and Cao, Jianjun and Li, Yuanze and Zhou, Zijian}, journal={arXiv preprint arXiv:2605.06152}, year={2026}}

@inproceedings{wortsman2024small, title={Small-scale Proxies for Large-scale Transformer Training Instabilities}, author={Wortsman, Mitchell and Liu, Peter J. and Xiao, Lechao and Everett, Katie and Alemi, Alex and Adlam, Ben and Co-Reyes, John D. and Gur, Izzeddin and Kumar, Abhishek and Novak, Roman and Pennington, Jeffrey and Sohl-Dickstein, Jascha and Xu, Kelvin and Lee, Jaehoon and Gilmer, Justin and Kornblith, Simon}, booktitle={International Conference on Learning Representations}, year={2024}}

@article{chowdhery2023palm, title={{PaLM}: Scaling Language Modeling with Pathways}, author={Chowdhery, Aakanksha and Narang, Sharan and Devlin, Jacob and others}, journal={Journal of Machine Learning Research}, volume={24}, number={240}, pages={1--113}, year={2023}}

@inproceedings{gao2019representation, title={Representation Degeneration Problem in Training Natural Language Generation Models}, author={Gao, Jun and He, Di and Tan, Xu and Qin, Tao and Wang, Liwei and Liu, Tie-Yan}, booktitle={International Conference on Learning Representations}, year={2019}}

@inproceedings{bis2021too, title={Too Much in Common: Shifting of Embeddings in Transformer Language Models and its Implications}, author={Bi{\'s}, Daniel and Podkorytov, Maksim and Liu, Xiuwen}, booktitle={Proceedings of the 2021 Conference of the North American Chapter of the Association for Computational Linguistics: Human Language Technologies}, year={2021}}

@inproceedings{kunstner2024heavy, title={Heavy-Tailed Class Imbalance and Why {Adam} Outperforms Gradient Descent on Language Models}, author={Kunstner, Frederik and Milligan, Alan and Yadav, Robin and Schmidt, Mark and Bietti, Alberto}, booktitle={Advances in Neural Information Processing Systems}, year={2024}}

@inproceedings{kunstner2023noise, title={Noise Is Not the Main Factor Behind the Gap Between {SGD} and {Adam} on Transformers, But Sign Descent Might Be}, author={Kunstner, Frederik and Chen, Jacques and Lavington, Jonathan Wilder and Schmidt, Mark}, booktitle={International Conference on Learning Representations}, year={2023}}

@inproceedings{land2024fishing, title={Fishing for {Magikarp}: Automatically Detecting Under-trained Tokens in Large Language Models}, author={Land, Sander and Bartolo, Max}, booktitle={Proceedings of the 2024 Conference on Empirical Methods in Natural Language Processing}, year={2024}}

@inproceedings{gupta2018shampoo, title={Shampoo: Preconditioned Stochastic Tensor Optimization}, author={Gupta, Vineet and Koren, Tomer and Singer, Yoram}, booktitle={International Conference on Machine Learning}, year={2018}}

@misc{jordan2024muon, title={Muon: An Optimizer for Hidden Layers in Neural Networks}, author={Jordan, Keller and Jin, Yuchen and Boza, Vlado and You, Jiacheng and Cesista, Franz and Newhouse, Laker and Bernstein, Jeremy}, year={2024}, howpublished={\url{https://kellerjordan.github.io/posts/muon/}}}

@article{bernstein2024old, title={Old Optimizer, New Norm: An Anthology}, author={Bernstein, Jeremy and Newhouse, Laker}, journal={arXiv preprint arXiv:2409.20325}, year={2024}}

@inproceedings{shazeer2018adafactor, title={Adafactor: Adaptive Learning Rates with Sublinear Memory Cost}, author={Shazeer, Noam and Stern, Mitchell}, booktitle={International Conference on Machine Learning}, year={2018}}

@inproceedings{chen2023lion, title={Symbolic Discovery of Optimization Algorithms}, author={Chen, Xiangning and Liang, Chen and Huang, Da and Real, Esteban and others}, booktitle={Advances in Neural Information Processing Systems}, year={2023}}

@inproceedings{bernstein2018signsgd, title={sign{SGD}: Compressed Optimisation for Non-Convex Problems}, author={Bernstein, Jeremy and Wang, Yu-Xiang and Azizzadenesheli, Kamyar and Anandkumar, Animashree}, booktitle={International Conference on Machine Learning}, year={2018}}

@article{duchi2011adaptive, title={Adaptive Subgradient Methods for Online Learning and Stochastic Optimization}, author={Duchi, John and Hazan, Elad and Singer, Yoram}, journal={Journal of Machine Learning Research}, volume={12}, pages={2121--2159}, year={2011}}

@misc{tieleman2012rmsprop, title={Lecture 6.5---{RMSProp}: Divide the Gradient by a Running Average of its Recent Magnitude}, author={Tieleman, Tijmen and Hinton, Geoffrey}, howpublished={COURSERA: Neural Networks for Machine Learning}, year={2012}}

@inproceedings{zhang2022adam, title={Adam Can Converge Without Any Modification on Update Rules}, author={Zhang, Yushun and Chen, Congliang and Shi, Naichen and Sun, Ruoyu and Luo, Zhi-Quan}, booktitle={Advances in Neural Information Processing Systems}, year={2022}}

@article{kim2026second, title={Second-Moment Memory in Coordinatewise {Adam}}, author={Kim, Jeonseong}, journal={arXiv preprint arXiv:2608.15824}, year={2026}}

@inproceedings{balles2018dissecting, title={Dissecting {Adam}: The Sign, Magnitude and Variance of Stochastic Gradients}, author={Balles, Lukas and Hennig, Philipp}, booktitle={International Conference on Machine Learning}, year={2018}}

@inproceedings{menon2021longtail, title={Long-tail Learning via Logit Adjustment}, author={Menon, Aditya Krishna and Jayasumana, Sadeep and Rawat, Ankit Singh and Jain, Himanshu and Veit, Andreas and Kumar, Sanjiv}, booktitle={International Conference on Learning Representations}, year={2021}}

@inproceedings{guo2017calibration, title={On Calibration of Modern Neural Networks}, author={Guo, Chuan and Pleiss, Geoff and Sun, Yu and Weinberger, Kilian Q.}, booktitle={International Conference on Machine Learning}, year={2017}}

@article{piantadosi2014zipf, title={Zipf's Word Frequency Law in Natural Language: A Critical Review and Future Directions}, author={Piantadosi, Steven T.}, journal={Psychonomic Bulletin \& Review}, volume={21}, number={5}, pages={1112--1130}, year={2014}}

@inproceedings{press2017using, title={Using the Output Embedding to Improve Language Models}, author={Press, Ofir and Wolf, Lior}, booktitle={Proceedings of the 15th Conference of the European Chapter of the Association for Computational Linguistics: Volume 2, Short Papers}, pages={157--163}, year={2017}}
\bibliographystyle{iclr2027_conference}

\appendix
\section{Proofs}\label{app:proofs}

\textbf{Proof of Proposition~\ref{prop:cons}.} By (F1), $\one^\top G_s=0$ for all $s$. (a) $\one^\top U_t=\sum_s\alpha_{t,s}\one^\top G_s=0$. (b) $\one^\top A_tM_tR_t=(A_t^\top\one)^\top M_tR_t=\alpha\,\one^\top M_tR_t=0$. For Shampoo, $L_t\one=\epsilon\one+\sum_s\gamma_{t,s}G_s(G_s^\top\one)=\epsilon\one$, so $\one$ is an eigenvector of the symmetric matrix $L_t$ and $A_t\one=L_t^{-1/4}\one=\epsilon^{-1/4}\one$. (c) $\one^\top M_t\psi_t(M_t^\top M_t)=0$. The Newton--Schulz step $X\mapsto aX+bXX^\top X+c(XX^\top)^2X=X(aI+bX^\top X+c(X^\top X)^2)$ maps a matrix of the form $M\phi(M^\top M)$ to another matrix of this form, because $X^\top X$ is then a function of $M^\top M$; by induction the output is $M\psi(M^\top M)$, and transposing to work on the smaller side does not change this. (d) $\sum_iU_{t,ij}=d_{t,j}\sum_iM_{t,ij}=0$. Finally $\wbar_{t+1}=W_{t+1}^\top\one/V=(1-\eta_t\lambda)\wbar_t-\eta_tU_t^\top\one/V$. The bias is the case $h\equiv1$. \hfill$\square$

\textbf{Proof of Proposition~\ref{prop:cov}.} Since $\sum_iM_{t,ij}=0$, $\frac1V\sum_id_{t,ij}M_{t,ij}=\frac1V\sum_i(d_{t,ij}-\bar d_{t,j})M_{t,ij}=\Cov_i(d_{t,ij},M_{t,ij})$, where the last step uses $\bar M_{t,j}=0$. For sign updates, $\frac1V\sum_i\sign(S_{t,ij})=(n^+_{t,j}-n^-_{t,j})/V$. \hfill$\square$

\textbf{Proof of Theorem~\ref{thm:sign}.} If $c_{t,i}=0$ then $g_{t,i}=p_i(b_t)>0$ and $b_{t+1,i}-b_{t,i}=-\eta$; otherwise $b_{t+1,i}-b_{t,i}\le\eta$. As $c_t$ is independent of $\mathcal F_t$, $\E[b_{t+1,i}-b_{t,i}\mid\mathcal F_t]\le-\eta(1-\pi_i)+\eta\pi_i$. The differences $D_t=(b_{t+1,i}-b_{t,i})-\E[b_{t+1,i}-b_{t,i}\mid\mathcal F_t]$ form a martingale difference sequence bounded by $2\eta$, so by the Azuma--Hoeffding inequality and the Borel--Cantelli lemma $\frac1t\sum_{s<t}D_s\to0$ almost surely, which gives the $\limsup$ bound. If $q_i=0$, then $c_{t,i}=0$ at every step. For the mean, average the bound over $i$ and use $\sum_i\pi_i=\E[\#\{i:c_{t,i}\ge1\}]\le\min(B,V)$. \hfill$\square$

\textbf{Proof of Theorem~\ref{thm:rms}.} (i) Over one period, $v$ is updated by an affine map with slope $\beta_2^N<1$, which has a unique fixed point that attracts every initial value; this gives the periodic sequence. All gradients are multiples of $p$ ($g_0=-(x-1)p$ and $g_n=p$ for $1\le n\le N-1$), so write $v_n=p^2\bar v_n$. For $1\le n\le N-1$, $\bar v_n=\beta_2\bar v_{n-1}+(1-\beta_2)$, hence $\bar v_n=\beta_2^n\bar v_0+1-\beta_2^n$; and $\bar v_0=\beta_2\bar v_{N-1}+(1-\beta_2)(x-1)^2=\beta_2^N\bar v_0+\beta_2-\beta_2^N+(1-\beta_2)(x-1)^2$, which gives the formula for $\bar v_0$. Summing $-\eta g_n/\sqrt{v_n}$ over one period gives $\Delta(x)$; $p$ and $a$ cancel.

(ii) At $x=N$, $\bar v_0-1=(1-\beta_2)\big((N-1)^2-1\big)/(1-\beta_2^N)>0$ for $N\ge3$. For $n\ge1$, $\bar v_n$ is a convex combination of $\bar v_0$ and $1$ with positive weight on $1$, so $\bar v_n<\bar v_0$. Hence $\Delta(N)=-\eta\sum_{n=1}^{N-1}(\bar v_n^{-1/2}-\bar v_0^{-1/2})<0$.

(iii) Write $\Delta=\eta F$ with $F(x)=(x-1)\bar v_0^{-1/2}-\sum_{n=1}^{N-1}\bar v_n^{-1/2}$ on $x>1$. With $c=\beta_2-\beta_2^N>0$, the first term equals $\sqrt{1-\beta_2^N}\,(x-1)/\sqrt{(1-\beta_2)(x-1)^2+c}$, whose derivative $\sqrt{1-\beta_2^N}\,c\,((1-\beta_2)(x-1)^2+c)^{-3/2}$ is positive. Each $\bar v_n$ increases with $x$ through $\bar v_0$, so each $-\bar v_n^{-1/2}$ increases. Thus $F$ is strictly increasing. By (ii) $F(N)<0$ (for $N=2$, $\bar v_0=\bar v_1=1$ and $x^\star=N$), and $F(x)\to\sqrt{(1-\beta_2^N)/(1-\beta_2)}>0$ as $x\to\infty$ because every $\bar v_n\to\infty$. So $F$ has a unique zero $x^\star>N$, and $p^\star=a/x^\star<a/N=q_i$. The zero depends only on $(N,\beta_2)$. Stability: if $p>p^\star$ then $x<x^\star$, $\Delta<0$ and $\log p_i$ decreases, and conversely.

(iv) Let $\beta_2=1-\kappa/N$ and $x=\xi N$. The zero condition is $\xi N-1=\sum_{n=1}^{N-1}\big(\beta_2^n+(1-\beta_2^n)/\bar v_0\big)^{-1/2}$. For $\xi$ in a compact subset of $(0,\infty)$, $\bar v_0\ge(\kappa/N)(\xi N-1)^2$ grows linearly in $N$, so $(1-\beta_2^n)/\bar v_0=O(1/N)$ uniformly, and $\beta_2^n=e^{-\kappa n/N}(1+O(1/N))$ uniformly in $n\le N$. Hence the right-hand side divided by $N$ converges uniformly on compacts to $\int_0^1e^{\kappa s/2}\,ds=\frac2\kappa(e^{\kappa/2}-1)$, while the left-hand side divided by $N$ converges to $\xi$. The functions $F_N(\xi)=(\text{LHS}-\text{RHS})/N$ are increasing by (iii), satisfy $F_N(1)<0$, and converge uniformly on compacts to $\xi-\frac2\kappa(e^{\kappa/2}-1)$, which has a unique simple zero $\xi^\star$. Hence the zeros of $F_N$ converge to $\xi^\star$, and $p^\star/q_i=N/x^\star=1/\xi_N\to1/\xi^\star=\rho(\kappa)$. \hfill$\square$

\textbf{Proof of Proposition~\ref{prop:others}.} (a) Over one period, $\sum_ng_n=Np-a=p(N-x)$. (b) The running maximum is non-decreasing and bounded, since $|g_t|\le a$, so it converges to $\hat v_\infty\ge\max_nv_n>0$; the change over one period then converges to $-\eta\hat v_\infty^{-1/2}\sum_ng_n$. (c) Since $p<a$, the arrival step has $\sign(g_0)=-1$ and the other $N-1$ steps have sign $+1$. \hfill$\square$

\section{Experimental details}\label{app:details}
\textbf{Conservation (Table~\ref{tab:cons}).} Softmax regression with $V=512$ classes, of which 384 occur with Zipf weights $\propto i^{-1.1}$; features $h\sim\mathcal N(\mu,I_{32})$ with a random mean $\mu$; labels drawn from a random softmax teacher. $B=64$, 300 steps, float64, $W_0\sim\mathcal N(0,0.02^2)$, $b_0=0$. Learning rates: SGD 0.5; heavy ball and Nesterov 0.05 (momentum 0.9); Shampoo 0.02 ($\epsilon=10^{-4}$, accumulated statistics); Muon 0.02 (momentum 0.95, five quintic Newton--Schulz steps, scale $\sqrt{\max(1,V/d)}$); Adam, RMSProp, AMSGrad, Coupled Adam and Adafactor $3\times10^{-3}$ ($\beta_2=0.999$); Lion $3\times10^{-4}$ ($\beta_1=0.9$, $\beta_2=0.99$); sign descent $10^{-3}$. Changes are reported, not losses; the learning rates only set the scale of the changes.

\textbf{Single token.} $\beta_2=0.99$, $a=10^{-3}$, $\eta=3\times10^{-3}$, $\epsilon=0$, no bias correction, $4\times10^5$ steps, initial logit $\log(a/N)$; average of the logit over the last third.

\textbf{Unigram model.} $V=5000$, $q_i\propto i^{-1.2}$, initial $b=\log q$, $3\times10^5$ steps; learning rate $4\times10^{-3}$ for RMSProp, Adam, AMSGrad and Coupled Adam, 1 for SGD, $10^{-3}$ for sign descent; $\epsilon=10^{-12}$; $\log p$ recorded every 20 steps in the second half. All runs with the same $B$ see the same minibatches.

\textbf{Language model.} Markov chain over 2048 tokens as in Section~\ref{sec:lm}; 8192 chains of length 64 after a burn-in of 50 steps for training, 1024 chains of length 32 for evaluation. Model: embedding (width 64), LayerNorm, MLP $64\to256\to64$ with GELU and a residual connection, final LayerNorm, linear output layer with bias ($W_0\sim\mathcal N(0,0.02^2)$, $b_0=0$), untied. $B=256$ pairs, $2\times10^4$ steps, linear warmup over 200 steps, then constant. SGD: learning rate 0.02, momentum 0.9. Adaptive methods: learning rate $3\times10^{-3}$, $\beta_1=0.9$ (0 for RMSProp), $\epsilon=10^{-8}$; Coupled Adam shares the second moment across the vocabulary for $W$ and $b$ of the output layer; AdamW uses decoupled weight decay $0.1$ on matrices. Test cross-entropy is computed on held-out pairs; KL uses the exact transition matrix weighted by the empirical context distribution.

\end{document}